\documentclass{article}

\usepackage[accepted]{icml2025}

\usepackage{amsmath}
\usepackage{amssymb}
\usepackage{amsthm}
\usepackage{mathtools}
\usepackage{booktabs}
\usepackage{graphicx}
\usepackage{hyperref}
\usepackage{microtype}
\usepackage{url}
\usepackage{xspace}

\newtheorem{theorem}{Theorem}[section]

\newtheorem{definition}[theorem]{Definition}
\newtheorem{remark}[theorem]{Remark}

\newcommand{\hitsplus}{\mathrm{hits}^{+}}
\newcommand{\hitsminus}{\mathrm{hits}^{-}}

\newcommand{\Mem}{\mathcal{M}}
\newcommand{\Ret}{\mathcal{M}_t}

\newcommand{\Ustar}{U^{*}}

\begin{document}

\icmltitle{When to Forget: A Memory Governance Primitive}

\icmlsetsymbol{equal}{*}

\begin{icmlauthorlist}
\icmlauthor{Baris Simsek}{}
\end{icmlauthorlist}

\icmlkeywords{agent memory, memory quality, confidence learning,
              empirical utility, lifelong learning}

\vskip 0.3in

\title{When to Forget: A Memory Governance Primitive}
\author{Baris Simsek}

\maketitle

\begin{abstract}
Agent memory systems accumulate experience but currently lack a
principled operational metric for memory quality governance---deciding
which memories to trust, suppress, or deprecate as the agent's task
distribution shifts. Write-time importance scores are static; dynamic
management systems use LLM judgment or structural heuristics rather
than outcome feedback. This paper proposes \emph{Memory Worth} (MW):
a two-counter per-memory signal that tracks how often a memory
co-occurs with successful versus failed outcomes, providing a
lightweight, theoretically grounded foundation for staleness detection,
retrieval suppression, and deprecation decisions. We prove that MW
converges almost surely to the conditional success probability $p^+(m) = \Pr[y_t =
+1 \mid m \in \mathcal{M}_t]$---the probability of task success given
that memory $m$ is retrieved---under a stationary retrieval regime with
a minimum exploration condition. Importantly, $p^+(m)$ is an
\emph{associational} quantity, not a causal one: it measures outcome
co-occurrence rather than causal contribution. We argue this is still a
useful operational signal for memory governance, and we validate it
empirically in a controlled synthetic environment where ground-truth
utility is known: after 10{,}000 episodes, the Spearman
rank-correlation between Memory Worth and true utilities reaches $\rho =
0.89 \pm 0.02$ across 20 independent seeds, compared to $\rho = 0.00$
for systems that never update their assessments. A retrieval-realistic micro-experiment with real text and standard
embedding retrieval (\texttt{all-MiniLM-L6-v2}) further shows stale
memories crossing the low-value threshold ($\mathrm{MW} = 0.17$) while
specialist memories remain high-value ($\mathrm{MW} = 0.77$) across
3{,}000 episodes. The estimator
requires only two scalar counters per memory unit and can be added to
architectures that already log retrievals and episode outcomes.
\end{abstract}

\section{Introduction}
\label{sec:intro}

When an agent stores a memory, it makes an implicit prediction: this
piece of knowledge will be useful in the future. That prediction may be
correct at the time of storage but become wrong as the world changes,
as the agent's task distribution shifts, or simply as new information
supersedes old. Many widely used agent memory systems rely primarily on
write-time heuristics or LLM-assigned importance scores to assess memory
quality~\citep{park2023generative,packer2023memgpt}. While more recent
systems have begun to incorporate dynamic memory management---including
admission control, deletion, and adaptive
retrieval~\citep{zhang2026amac,xu2025amem}---a simple, general
per-memory estimator of post-retrieval outcome association, with
convergence guarantees, has not been proposed.

The consequence is that outcome signals available at every episode
go unused. A memory that has been present in the retrieval set during
dozens of failures continues to be treated as trustworthy. A memory
that consistently co-occurs with successful outcomes receives no
additional credit. The agent's memory store accumulates experience, but
quality information from that experience is discarded.

This paper addresses the memory governance problem directly: how should
an agent decide which stored memories remain trustworthy as experience
accumulates? We argue that memory governance requires an operational
primitive: a per-memory online signal that accumulates retrieval
evidence and supports suppression, re-verification, prioritisation, and
deprecation decisions over time. We propose such a primitive---
\emph{Memory Worth}---a lightweight two-counter statistic that tracks,
for each memory, how often it co-occurs with successful versus failed
outcomes. MW supports staleness detection, retrieval suppression, and
deprecation without requiring causal attribution or architectural
changes. The contribution is both methodological and empirical: we
prove convergence to post-retrieval conditional success probability
under explicit assumptions, characterise three distinct failure modes
with quantified magnitudes, and demonstrate the governance primitive
under modern embedding retrieval. The central limitation is explicit:
MW measures outcome association, not causation, and therefore should be
treated as the minimal operational primitive from which practical
memory governance systems can be composed.

The estimator requires no architectural changes beyond outcome and
retrieval logging, which architectures that already log agent
interactions expose naturally.

\paragraph{Contributions.}
\begin{itemize}
\item \textbf{Governance primitive.} We define Memory Worth, a two-counter per-memory online signal that enables staleness detection, retrieval suppression, uncertainty-aware review, and deprecation decisions without causal attribution or architectural changes.

\item \textbf{Theoretical grounding.} We prove almost-sure convergence
  (Section~\ref{sec:theory}) under explicit assumptions (A1)--(A6) by
  a martingale argument, and show empirically that the Bayesian
  Beta-Bernoulli posterior mean converges to the same ranking at
  long-run evidence levels (Experiment~1, $\rho = 0.89 \pm 0.02$).

\item \textbf{Failure-mode science.} Three experiments characterise
  when and how MW breaks under realistic A3 violations
  (Experiments~2--4): task-difficulty confounding leaves global MW
  negatively correlated ($\rho \approx -0.33$) but hard-task-only
  conditioning recovers a positive signal ($\rho \approx +0.14 \pm 0.07$);
  retrieval-policy feedback does not cause collapse in the tested
  softmax regime; co-retrieval confounding requires $\approx$30\% retrieval
  diversity to resolve in the tested setting.

\item \textbf{Embedding-retrieval validation.} A retrieval-realistic
  micro-experiment (\texttt{all-MiniLM-L6-v2}, 3{,}000 episodes) shows
  stale memory MW reaching $0.17$ while specialist MW stabilises at
  $0.77$---and reproduces the hitchhiker co-retrieval pathology
  predicted by Experiment~4, demonstrating that semantic retrieval
  systems naturally induce the confound the theory identifies
  (Experiment~5, Section~\ref{sec:exp5}).
\end{itemize}

\section{Background and Related Work}
\label{sec:related}

\paragraph{Agent memory systems.}
\citet{park2023generative} introduce a three-component retrieval score
combining recency, LLM-assigned importance, and embedding relevance.
The importance score is the closest precursor to Memory Worth, but it
is static---assigned once at write time and never updated by outcomes.
\citet{packer2023memgpt} give agents explicit memory management
operations but provide no outcome-driven quality signal.
\citet{shinn2023reflexion} use failure outcomes to update memory at the
episode level---storing a self-reflection about what went wrong---rather
than tracking per-memory outcome statistics.
\citet{zhong2024memorybank} model memory decay via an Ebbinghaus
forgetting curve, capturing temporal dynamics but not outcome feedback.

More recent work has moved toward dynamic memory governance.
\citet{zhang2026amac} propose Adaptive Memory Admission Control
(A-MAC), which scores candidate memories along five dimensions
(utility, confidence, novelty, recency, type prior) before admission
to long-term storage. A-MAC operates \emph{before} storage, assigning
write-time scores that can be learned and adapted; Memory Worth operates
\emph{after} repeated retrieval, updating a per-memory statistic from
observed outcomes. The two are complementary: A-MAC controls what enters
memory; Memory Worth tracks how well what is in memory actually performs.
\citet{xu2025amem} propose A-MEM, which uses LLM prompting to
dynamically organize and evolve memory through Zettelkasten-style
linking. A-MEM focuses on structural organization and retrieval quality
rather than per-memory outcome tracking.

\paragraph{Credit assignment in reinforcement learning.}
The credit assignment problem~\citep{minsky1961steps,sutton2018rl}
asks which past actions caused a delayed reward. RL methods for credit
assignment~\citep{arjona2019rudder,harutyunyan2019hindsight} operate
over temporal sequences of actions, where credit flows backward through
time. In memory retrieval there is no temporal sequence over memories:
multiple memories are retrieved simultaneously. The classical machinery
of temporal difference learning and eligibility traces does not apply
directly.

\paragraph{Surveys and benchmarks.}
Recent surveys have mapped the rapidly expanding landscape of agent
memory. \citet{zhang2025memsurvey} provide a comprehensive taxonomy
distinguishing factual, experiential, and working memory, and survey
dynamics of memory formation, evolution, and retrieval; they identify
memory quality assessment as an open problem. \citet{hu2025evalmem}
introduce a benchmark covering four core memory competencies---accurate
retrieval, test-time learning, long-range understanding, and selective
forgetting---and find that existing systems fall short on all four.
Selective forgetting in particular is the competency most directly
addressed by Memory Worth: a convergent quality signal is a prerequisite
for principled deprecation decisions.

\paragraph{Honest gap statement.}
Prior dynamic memory management systems---including
A-MAC~\citep{zhang2026amac} and A-MEM~\citep{xu2025amem}---update
memory quality assessments, but do so through LLM judgment, structural
reorganization, or write-time heuristics rather than from per-memory
retrieval-outcome statistics. To our knowledge, prior work has not
isolated the post-retrieval per-memory success-rate estimator defined
here---tracking $p^+(m) = \Pr[y_t = +1 \mid m \in \mathcal{M}_t]$ via
two online counters per memory unit---and analyzed its convergence under
explicit assumptions. The novelty of Memory Worth is precise and operationally distinct:
a lightweight post-retrieval associational estimator with a convergence
guarantee and only two scalars of overhead.

\section{Memory Worth}
\label{sec:measure}

\subsection{Setting}
\label{sec:setting}

An agent interacts with tasks over discrete episodes $t = 1, 2, \ldots$
At each episode, it retrieves a set $\Ret \subset \Mem$ of $k$ memories
from the memory store $\Mem = \{m_1, \ldots, m_N\}$, takes an action,
and observes an outcome $y_t \in \{+1, -1\}$ where $+1$ denotes
success and $-1$ denotes failure. The agent also assigns a
\emph{retrieval weight} $w_t(m) \geq 0$ to each retrieved memory,
reflecting how influential that memory was in producing the action. We
require $\sum_{m \in \Ret} w_t(m) = 1$.

\subsection{Definition}

\begin{definition}[Memory Worth]
\label{def:mw}
The Memory Worth of memory $m$ after $T$ episodes is:
\begin{equation}
\label{eq:mw}
\mathrm{MW}_T(m) =
  \frac{\hitsplus_T(m)}{\hitsplus_T(m) + \hitsminus_T(m)}
\end{equation}
where the weighted retrieval counts accumulate as:
\begin{align}
\label{eq:counts}
\hitsplus_T(m)  &= \sum_{t=1}^{T} w_t(m)\,\mathbf{1}[m \in \Ret]\,\mathbf{1}[y_t = +1] \\
\hitsminus_T(m) &= \sum_{t=1}^{T} w_t(m)\,\mathbf{1}[m \in \Ret]\,\mathbf{1}[y_t = -1]
\end{align}
When $\hitsplus_T(m) + \hitsminus_T(m) = 0$, we set $\mathrm{MW}_T(m) = 0.5$
(uninformative prior).
\end{definition}

$\mathrm{MW}_T(m)$ is the empirical success rate of episodes in which
$m$ was retrieved, weighted by retrieval influence. It is bounded in
$[0,1]$, interpretable without domain knowledge, and requires only two
scalar counters per memory unit.

\subsection{Retrieval Weight}

The retrieval weight $w_t(m)$ can be set in several ways depending on
available information:

\begin{itemize}
\item \textbf{Uniform}: $w_t(m) = 1/k$ for all $m \in \Ret$. No
  additional information required.
\item \textbf{Score-proportional}: $w_t(m) \propto \mathrm{score}(m, q_t)$
  where $\mathrm{score}$ is the retrieval scoring function (e.g.,
  incorporating semantic similarity and recency). Gives higher weight
  to memories that most influenced the retrieval decision.
\item \textbf{Oracle}: $w_t(m) \propto \Ustar(m)$ where $\Ustar(m)$ is
  the true utility---only available in controlled experiments.
\end{itemize}

As shown in Section~\ref{sec:experiments}, all three weightings converge
to similar final values in the stationary long-run limit. The choice of
weight matters primarily for \emph{convergence speed} and for behavior
under distribution shift, which are addressed in future work.

\subsection{Two Counts Are Necessary}
\label{sec:taxonomy}

A single scalar $\mathrm{MW}_T(m)$ conceals important information that
the dual-count representation preserves. Consider two memories, both
with $\mathrm{MW} = 0.80$ computed from different count pairs:

\begin{center}
\begin{tabular}{lrrl}
\toprule
Memory & $\hitsplus$ & $\hitsminus$ & Status \\
\midrule
$m_A$ & 80.0 & 20.0 & High evidence; reliable \\
$m_B$ & 8.0  & 2.0  & Low evidence; uncertain \\
\bottomrule
\end{tabular}
\end{center}
$m_A$ and $m_B$ have identical Memory Worth but $m_A$ has ten times more
evidence. A system tracking only the ratio cannot distinguish them.

More importantly, the two counts together enable an evidence-aware
value taxonomy. A global ratio $r_m$ near $0.5$ does not by itself
identify a context-dependent memory---it identifies a memory with
\emph{mixed outcomes}. Whether those mixed outcomes arise from genuine
context-dependence (useful in task type A, harmful in task type B) or
from confounding (retrieved alongside genuinely useful memories
sometimes, unhelpful ones other times) cannot be determined from the
global ratio alone. Distinguishing these cases requires conditional
slices of $\mathrm{MW}$ by query cluster or task family, which is a
natural extension deferred to future work. The taxonomy below describes
what the dual counts \emph{can} directly support:

\begin{definition}[Value taxonomy]
\label{def:taxonomy}
Let $r_m = \hitsplus(m) / (\hitsplus(m) + \hitsminus(m))$ (the success-rate ratio, distinct from Spearman~$\rho$) and
$V_m = \hitsplus(m) + \hitsminus(m)$.
\begin{itemize}
\item \textbf{High-value}: $r_m > \theta_H$ and $V_m \geq V_{\min}$.
  The memory is consistently associated with successful outcomes.
  This may be useful as a signal to increase retrieval priority,
  subject to contextual controls or human review.
\item \textbf{Uncertain}: $V_m < V_{\min}$. Insufficient evidence to
  classify; use prior or global pool average.
\item \textbf{Mixed-outcome}: $\theta_L \leq r_m \leq \theta_H$
  and $V_m \geq V_{\min}$. Outcomes are neither consistently positive
  nor consistently negative. Conditional Memory Worth (by task or
  query cluster) is needed before acting.
\item \textbf{Low-value}: $r_m < \theta_L$ and $V_m \geq V_{\min}$.
  The memory is predominantly associated with failure. Retrieval
  suppression or deprecation may be appropriate, subject to contextual
  controls or human review, given the associational nature of MW.
\end{itemize}
\end{definition}

The key advantage over a single scalar is the separation of
\emph{uncertain} (insufficient evidence) from \emph{mixed-outcome}
(sufficient evidence, ambiguous signal): a raw ratio is identical for
both when $r_m = 0.5$, but $V_m$ distinguishes them. A system
acting on Memory Worth should not deprecate an uncertain memory---it
has simply not been retrieved enough to form an estimate.

\section{Convergence Theory}
\label{sec:theory}

\subsection{Main Result}

\begin{theorem}[Convergence of Memory Worth]
\label{thm:convergence}
Assume the following conditions hold for memory $m$:
\begin{enumerate}
\item[(A1)] \textbf{Stationarity.} The joint distribution of
  $(\mathbf{1}[m \in \Ret],\, y_t)$ is stationary across episodes.
\item[(A2)] \textbf{Exploration.} There exists $\delta > 0$ such that
  $\Pr[m \in \Ret \mid \mathcal{F}_{t-1}] \geq \delta$ for all $t$.
  This ensures $m$ is retrieved infinitely often even if retrieval
  adapts based on confidence.
\item[(A3)] \textbf{Conditional independence.} Given history
  $\mathcal{F}_{t-1}$, the retrieval indicator $\mathbf{1}[m \in
  \Ret]$ is independent of the outcome $y_t$.
\item[(A4)] \textbf{Bounded outcomes.} $y_t \in \{-1, +1\}$.
\item[(A5)] \textbf{Minimum weight.} There exists $w_{\min} > 0$ such
  that $w_t(m) \geq w_{\min}$ whenever $m \in \Ret$. This prevents
  score-proportional weighting schemes from assigning arbitrarily small
  weights that stall convergence. In practice, weights can be clipped
  from below by a small constant (e.g.\ $w_{\min} = 0.01$) without
  materially affecting the estimator.
\item[(A6)] \textbf{Outcome stationarity given retrieval.}
  $\Pr[y_t = +1 \mid m \in \Ret, \mathcal{F}_{t-1}] = p^+(m)$
  for all $t$. That is, the conditional success probability given
  retrieval is constant across episodes and equal to the marginal
  $p^+(m)$. This is the condition that makes $D_t$ a martingale
  difference; it rules out scenarios where the success rate fluctuates
  across episodes in a way correlated with history (e.g.\ alternating
  easy/hard tasks with different base rates).
\end{enumerate}
\noindent
Under (A1)--(A6), as $T \to \infty$:
\begin{equation}
\label{eq:convergence}
\mathrm{MW}_T(m) \;\xrightarrow{a.s.}\; p^+(m)
\end{equation}
where $p^+(m) = \Pr[y_t = +1 \mid m \in \Ret]$ is the conditional
success probability given memory $m$ is in the retrieval set.

\emph{Scope note.} Assumption (A3) holds when retrieval is uniform
random (as in our experiment) or governed by a policy that conditions
only on query semantics and is independent of the true outcome
conditional on history. It is violated when retrieval quality directly
predicts outcome conditional on history---a realistic situation in
agents that retrieve harder memories for harder tasks. In that regime, we conjecture that
$\mathrm{MW}_T(m)$ still converges, but to a quantity that conflates
retrieval-difficulty effects with $p^+(m)$; the precise limiting value
depends on the joint distribution of task difficulty and retrieval
probability, and we do not characterise it formally here.
We view (A3) as a calibration assumption rather than a physical law,
and flag it as the principal assumption requiring scrutiny in deployment.
\end{theorem}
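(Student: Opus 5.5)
The plan is to write $\mathrm{MW}_T(m) - p^+(m)$ as a ratio whose numerator is a martingale and whose denominator grows linearly, and then apply a strong law for martingales. Let $R_t = \mathbf{1}[m \in \Ret]$ and $S_t = \mathbf{1}[y_t = +1]$, and assume the weight $w_t(m)$ is $\mathcal{F}_{t-1}$-measurable given retrieval (for instance a deterministic function of the retrieval scores, which are available when $\Ret$ is formed). Define
\[
D_t = w_t(m)\,R_t\,\bigl(S_t - p^+(m)\bigr), \qquad V_T = \sum_{t=1}^T w_t(m) R_t = \hitsplus_T(m) + \hitsminus_T(m).
\]
Then $\hitsplus_T(m) - p^+(m)\,V_T = \sum_{t\le T} D_t =: M_T$, and whenever $V_T>0$,
\[
\mathrm{MW}_T(m) - p^+(m) = M_T / V_T .
\]
It therefore suffices to show $M_T/T \to 0$ a.s. and $\liminf_T V_T/T > 0$ a.s.

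\textbf{Step 1 (martingale difference).} I would show $\mathbb{E}[D_t \mid \mathcal{F}_{t-1}, R_t] = w_t(m) R_t\,(\Pr[y_t=+1 \mid R_t=1,\mathcal{F}_{t-1}] - p^+(m)) = 0$ on $\{R_t=1\}$, using (A6); the term vanishes trivially on $\{R_t=0\}$. Taking conditional expectations over $R_t$ gives $\mathbb{E}[D_t\mid\mathcal{F}_{t-1}]=0$. Assumption (A3) is consistent with (A6) and justifies identifying the conditional success rate given retrieval with $p^+(m)$. Assumption (A1) ensures that $p^+(m)$ is a well-defined constant.

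\textbf{Step 2 (numerator).} Since $w_t(m)\le 1$ (the weights sum to one) and (A4) gives $|S_t - p^+|\le 1$, we have $|D_t|\le 1$. Hence $\sum_t \mathbb{E}[D_t^2\mid\mathcal{F}_{t-1}]/t^2 < \infty$. The martingale $\sum_t D_t/t$ is therefore $L^2$-bounded and converges a.s., and Kronecker's lemma yields $M_T/T \to 0$ a.s. Azuma--Hoeffding with Borel--Cantelli would serve equally well.

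\textbf{Step 3 (denominator).} By (A5), $V_T \ge w_{\min}\sum_{t\le T} R_t$. By (A2), $R_t - \Pr[R_t=1\mid\mathcal{F}_{t-1}]$ is a bounded martingale difference, and the same argument as Step 2 gives
\[
\frac{1}{T}\sum_{t\le T} R_t \;\ge\; \frac{1}{T}\sum_{t\le T}\Pr[R_t=1\mid\mathcal{F}_{t-1}] + o(1) \;\ge\; \delta + o(1) \quad \text{a.s.}
\]
Thus $\liminf V_T/T \ge w_{\min}\delta > 0$. In particular $V_T>0$ eventually, so the uninformative default value $0.5$ is irrelevant in the limit. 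Combining the steps, $M_T/V_T = (M_T/T)/(V_T/T)\to 0$ a.s., which is \eqref{eq:convergence}.

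The main obstacle is Step 1, specifically the measurability of $w_t(m)$. If the weight could depend on the outcome $y_t$, as the oracle or post-hoc influence weights might, $D_t$ would fail to be a martingale difference and the limit would be a weighted success rate rather than $p^+(m)$. I would state explicitly that $w_t(m)$ is determined before $y_t$ is revealed, which is consistent with its description as reflecting retrieval scores. Failing that, I would check that the weights are conditionally independent of $y_t$ given retrieval and $\mathcal{F}_{t-1}$, which suffices for the same computation.
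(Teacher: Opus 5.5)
Your proof is correct, and its skeleton matches the paper's. You use the same decomposition $\mathrm{MW}_T(m)-p^+(m)=M_T/V_T$ (the paper's $S_T$ is your $V_T$) and the same martingale difference $D_t$, justified by (A6). You also use the same bound $|D_t|\le w_t(m)\le 1$ from weight normalisation, and the same appeal to (A2) and (A5) to make the denominator grow.

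The genuine difference is the limit step. The paper divides by the random total weight $S_T$ directly and appeals to the martingale strong law with random normalisation (Hall--Heyde, Theorem 2.18), citing only $\langle M\rangle_T\le S_T$ and $S_T\to\infty$. You divide numerator and denominator by the deterministic $T$. You get $M_T/T\to 0$ from $L^2$-convergence of $\sum_t D_t/t$ plus Kronecker's lemma, and you prove $\liminf_T V_T/T\ge w_{\min}\delta$ separately.

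Your route has two advantages. It is more elementary, and it avoids a step the paper passes over: the standard form of that theorem wants a nondecreasing \emph{predictable} normaliser, whereas $S_T$ depends on the retrieval at time $T$. It also combines directly with Azuma--Hoeffding, as you note, to give an explicit finite-sample error of order $\sqrt{\log(1/\epsilon)/T}/(w_{\min}\delta)$, which Remark~\ref{rem:rate} leaves open. In exchange, the paper's route needs only $S_T\to\infty$, once its normaliser is made predictable by enlarging the filtration. It would therefore survive a decaying exploration rate with $\sum_t\Pr[m\in\Ret\mid\mathcal{F}_{t-1}]=\infty$, whereas your argument uses the uniform $\delta$ of (A2) essentially.

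Your caveat on the weights is well placed; the paper's Step~2 silently relies on it when it pulls $w_t(m)$ out of $\mathbb{E}[\cdot\mid\mathcal{F}_{t-1}]$. However, ``determined before $y_t$ is revealed'' is not enough on its own for score-proportional weights. The current query enters the scores and may be correlated with $y_t$. What Step~1 actually needs is your fallback, $\Pr[y_t=+1\mid\mathcal{F}_{t-1},R_t=1,w_t(m)]=p^+(m)$, or else $w_t(m)R_t$ being $\sigma(\mathcal{F}_{t-1},R_t)$-measurable.
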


\begin{proof}
Let $X_t = w_t(m)\,\mathbf{1}[m \in \Ret]\,\mathbf{1}[y_t = +1]$
and $Z_t = w_t(m)\,\mathbf{1}[m \in \Ret]$.
Define $S^+_T = \sum_{t=1}^T X_t$ and $S_T = \sum_{t=1}^T Z_t$.
Then $\mathrm{MW}_T(m) = S^+_T / S_T$.

\emph{Step 1 ($S_T \to \infty$ a.s.):} By (A2), $\Pr[m \in \Ret
\mid \mathcal{F}_{t-1}] \geq \delta > 0$ uniformly. By (A5),
$w_t(m) \geq w_{\min} > 0$ whenever $m$ is retrieved, so the
increments $Z_t = w_t(m)\mathbf{1}[m \in \Ret]$ satisfy
$\mathbb{E}[Z_t \mid \mathcal{F}_{t-1}] \geq \delta w_{\min} > 0$.
By the martingale strong law, $S_T \to \infty$ a.s.

\emph{Step 2 (Martingale difference):} Define $D_t = X_t -
Z_t\,p^+(m)$, where $X_t = w_t(m)\,\mathbf{1}[m \in \Ret]\,
\mathbf{1}[y_t=+1]$ and $Z_t = w_t(m)\,\mathbf{1}[m \in \Ret]$.
By (A6), $\Pr[y_t=+1 \mid m \in \Ret, \mathcal{F}_{t-1}] = p^+(m)$
for all $t$. Therefore:
\begin{align*}
\mathbb{E}[D_t \mid \mathcal{F}_{t-1}]
&= w_t(m)\,\Pr[m \in \Ret \mid \mathcal{F}_{t-1}]\\
&\quad\cdot\bigl(\Pr[y_t{=}+1 \mid m{\in}\Ret,\mathcal{F}_{t-1}] - p^+(m)\bigr)\\
&= 0.
\end{align*}
Hence $M_T = \sum_{t=1}^T D_t$ is a martingale difference sequence.
Note that (A3) is used to ensure the retrieval decision does not
itself carry information about $y_t$ beyond what $\mathcal{F}_{t-1}$
already contains; (A6) supplies the constant-$p^+(m)$ condition that
(A3) alone does not imply in non-i.i.d.\ settings.
For the bound on $|D_t|$: since weights are normalised
($\sum_{m'\in\Ret} w_t(m') = 1$), each individual weight satisfies
$w_t(m) \leq 1$. By (A4), $y_t \in \{-1,+1\}$, so $\mathbf{1}[y_t=+1]
\in [0,1]$ and $p^+(m) \in [0,1]$. Therefore
$|D_t| = w_t(m)\,\mathbf{1}[m\in\Ret]\,|\mathbf{1}[y_t=+1] - p^+(m)|
\leq w_t(m) \leq 1$ a.s.,
giving $\mathbb{E}[D_t^2 \mid \mathcal{F}_{t-1}] \leq 1$.
The upper bound $w_t(m) \leq 1$ follows directly from the
normalisation condition in Section~\ref{sec:setting}, not from
a separate assumption.

\emph{Step 3 (Convergence):} The predictable quadratic variation
satisfies $\langle M \rangle_T \leq S_T$. Since $S_T \to \infty$,
the conditions of the martingale strong law of large
numbers~\citep{hall1980martingale} (Theorem 2.18) are met:
$M_T / S_T \to 0$ a.s. Therefore:
\[
\mathrm{MW}_T(m) = \frac{S^+_T}{S_T} = p^+(m) + \frac{M_T}{S_T}
\;\xrightarrow{a.s.}\; p^+(m). \qquad \square
\]
\end{proof}

\begin{remark}[Rate intuition]
\label{rem:rate}
The estimation error $|\mathrm{MW}_T(m) - p^+(m)|$ decreases with
total weighted retrieval count $K_T(m) = S_T$. Intuitively: memories
retrieved more often have more evidence and more reliable estimates.
A formal finite-sample bound requires careful treatment of the random
denominator $S_T$ and the predictable quadratic variation
$\langle M \rangle_T$; we leave this to future work. We acknowledge that finite-sample
rates are important for calibrating the evidence threshold $V_{\min}$
in practice; the a.s.\ convergence guarantee of
Theorem~\ref{thm:convergence} establishes the signal is correct in
the limit, but does not bound how quickly $V_{\min}$ retrievals
suffice for reliable governance decisions.
\end{remark}

\begin{remark}[No causal knowledge required]
Theorem~\ref{thm:convergence} does not assume that the agent knows
\emph{which} retrieved memory caused the outcome. The convergence
holds purely from the co-occurrence of retrieval and outcomes. This
is the key property that makes Memory Worth computable in practice:
causal attribution is not needed.
\end{remark}

\subsection{What Memory Worth Converges To}

$p^+(m) = \Pr[y_t = +1 \mid m \in \Ret]$ is the probability of
success on episodes where $m$ is retrieved. This is an \emph{associational}
quantity, not a causal one. A memory that consistently co-occurs with
genuinely useful memories will accumulate positive counts even if it
does not itself contribute to success. Conversely, a memory retrieved
in difficult tasks---where success is unlikely regardless of memory
quality---will accumulate negative counts unfairly.

The claim of this paper is not that $p^+(m)$ identifies causal
contributors to success. The claim is narrower and still useful:
$p^+(m)$ is a convergent, computable operational signal. Prioritizing
memories with high $\mathrm{MW}$ over memories with low $\mathrm{MW}$
tends to shift the composition of retrieval sets toward memories
that historically co-occur with success. Whether this is because those
memories cause success or because they are structurally associated with
good retrieval contexts, the expected outcome of retrieval improves.
Causal identification of per-memory utility is an open problem that
Memory Worth does not attempt to solve.

\section{Experiments}
\label{sec:experiments}

\subsection{Design}

We construct a synthetic environment where the ground-truth utility
$\Ustar(m)$ of every memory is known, enabling direct measurement of
whether Memory Worth tracks true quality. This setup is deliberately
controlled: uniform random retrieval ensures that assumption (A3) holds
exactly, so the experiment tests whether the estimator converges as
proved---not whether it is robust to violations of (A3) in real agents.
The latter is addressed empirically in Section~\ref{sec:a3violations}.

\paragraph{Memory pool.} $N = 100$ memories. Each memory $m_i$ is
assigned a true utility $\Ustar(m_i) \sim \mathrm{Uniform}(0,1)$,
drawn once per seed and held fixed throughout all episodes.

\paragraph{Retrieval.} At each episode, $k = 8$ memories are sampled
uniformly at random from the pool, satisfying assumption (A2) with
$\delta = k/N = 0.08$ and ensuring (A3) holds exactly. This
intentionally sanitizes the retrieval process to isolate the
estimator's behavior from confounding by retrieval policy.

\paragraph{Outcomes.} Episode success is drawn from:
\[
y_t = +1 \;\text{ with probability }\;
\mathrm{clip}\!\left(
  \tfrac{1}{k}\textstyle\sum_{m \in \Ret}\Ustar(m) + \xi_t,\;
  0,\; 1\right)
\]
where $\xi_t \sim \mathcal{N}(0, \sigma^2)$ with $\sigma = 0.10$.
By design, success probability equals mean utility of the retrieved set
plus noise. This means co-occurrence with success is informative about
utility for every memory, which is the best-case scenario for the
estimator. Real-agent settings---where task difficulty confounds the
retrieval-outcome relationship---are harder and are not tested here.

\paragraph{Weighting strategies compared.} All four strategies receive
identical worlds (same seed, same retrievals, same outcomes):
\begin{itemize}
\item \textbf{No update}: $\mathrm{MW}_T(m) = 0.5$ permanently.
  Represents any system that never uses retrieval outcomes to revise
  memory quality assessments.
\item \textbf{Uniform}: $w_t(m) = 1/k$ for all retrieved memories.
\item \textbf{Score-proportional} (Ours): $w_t(m) \propto \mathrm{sim}(m)$,
  where $\mathrm{sim}(m)$ is a semantic similarity proxy correlated
  with $\Ustar(m)$ at $r = 0.65$.
\item \textbf{Oracle}: $w_t(m) \propto \Ustar(m)$. Ground-truth weights;
  establishes an upper bound on weighting accuracy.
\end{itemize}

The no-update baseline is intentionally simple---it shows the cost of
discarding outcome information entirely. Stronger baselines, including
a Bayesian Beta-Bernoulli estimator with uncertainty-aware ranking and
a contextual estimator conditioning on query cluster, are important
comparisons deferred to future work alongside real-agent experiments.

\paragraph{Metric.} Spearman rank-correlation between $\mathrm{MW}_T(m)$
and $\Ustar(m)$ across all $N = 100$ memories, at each checkpoint
(every 500 episodes), averaged over 20 independent seeds.

\subsection{Results}

\begin{figure}[t]
\centering
\includegraphics[width=\columnwidth]{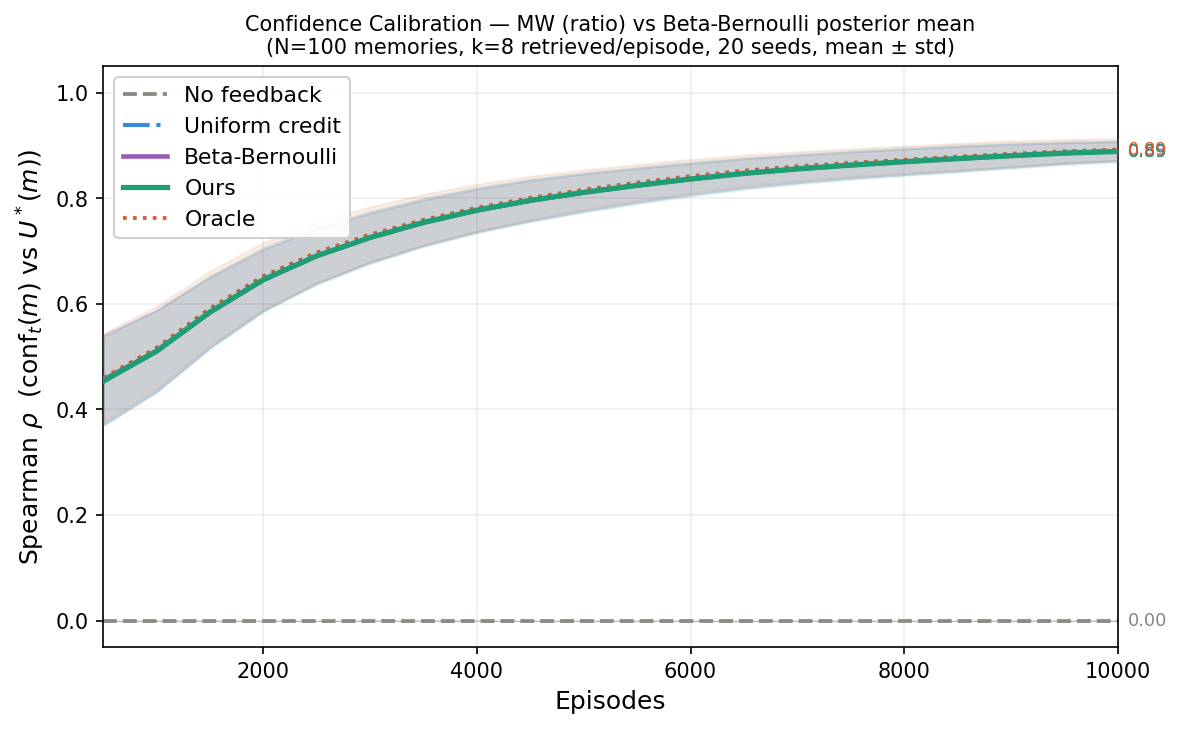}
\caption{Memory Worth calibration over episodes. Spearman $\rho$ between
$\mathrm{MW}_T(m)$ and $\Ustar(m)$ for all four weighting strategies,
averaged over 20 seeds (shaded regions: $\pm$1 std). The no-feedback
baseline stays at $\rho = 0$ throughout. All three updating strategies
converge to $\rho \approx 0.89$ by episode 10{,}000.}
\label{fig:calibration}
\end{figure}

\begin{table}[t]
\caption{Memory Worth calibration summary. Values are Spearman $\rho$
(mean $\pm$ std over 20 seeds). $\Delta$ is the difference from Uniform
at episode 10{,}000. Beta-Bernoulli converges to the same ranking as
Uniform at long-run evidence levels and is discussed in the text.}
\label{tab:results}
\vskip 0.1in
\begin{center}
\begin{small}
\begin{tabular}{lcccc}
\toprule
Method & $\rho$@2k & $\rho$@5k & $\rho$@10k & $\Delta$ \\
\midrule
No update        & $0.00_{\pm.00}$ & $0.00_{\pm.00}$ & $0.00_{\pm.00}$ & $-0.89$ \\
Uniform          & $0.66_{\pm.06}$ & $0.81_{\pm.03}$ & $0.89_{\pm.02}$ & $0.00$ \\
Ours (sim-wt.)   & $0.66_{\pm.06}$ & $0.81_{\pm.04}$ & $0.89_{\pm.02}$ & $0.00$ \\
Oracle           & $0.67_{\pm.06}$ & $0.82_{\pm.04}$ & $0.90_{\pm.02}$ & $+0.00$ \\
\bottomrule
\end{tabular}
\end{small}
\end{center}
\vskip -0.1in
\end{table}

Results are shown in Figure~\ref{fig:calibration} and
Table~\ref{tab:results}.

\paragraph{Memory Worth converges strongly (RQ1 confirmed).} All three
updating strategies rise from $\rho \approx 0.66$ at episode 2{,}000
to $\rho \approx 0.89$--$0.90$ at episode 10{,}000, with standard
deviations of $\pm 0.02$. The no-feedback baseline remains flat at
$\rho = 0.00$ throughout---it has no mechanism for revising its initial
assessment and therefore never learns anything about memory quality.
The contrast is stark: a difference of $0.89$ in Spearman~$\rho$
separating a system that tracks outcomes from one that does not.

\paragraph{The long-run limit is the same for all weighting strategies.}
Uniform, score-proportional, and oracle weighting converge to
statistically indistinguishable final values ($\rho = 0.89$, $0.89$,
$0.90$ respectively; differences within one standard deviation). This
is expected from Theorem~\ref{thm:convergence}: in the stationary
long-run limit, all three weightings estimate the same quantity
$p^+(m)$, just with different variance in intermediate estimates.
The choice of weighting matters for \emph{convergence speed} and for
\emph{non-stationary environments}; in the stationary setting examined
here, 10{,}000 episodes is sufficient for all strategies to converge.

\paragraph{Beta-Bernoulli posterior mean matches MW at convergence.}
The Beta-Bernoulli estimator with uniform prior ($\alpha = \beta = 1$)
yields $\rho = 0.89 \pm 0.02$ at episode 10{,}000---identical to raw
MW with uniform credit ($\Delta = 0.000$, Table~\ref{tab:results}).
This is the expected theoretical result: the posterior mean
$(\alpha + \mathrm{hits}^+)/(\alpha + \beta + K_T(m))$ converges
to the same limit as the raw ratio as $K_T(m) \to \infty$, with a
shrinkage factor of $(\alpha+\beta)/K_T(m)$ that vanishes with
evidence. The practical advantage of the Bayesian formulation is at
\emph{low evidence counts}: with fewer than $\sim$10 retrievals, the
posterior mean is pulled toward $0.5$ rather than producing extreme
estimates from small sample ratios. In this stationary long-run
experiment the two estimators are indistinguishable; the difference
would be visible in early episodes or in low-retrieval-frequency
regimes. We intentionally compare posterior mean rather than credible-bound
ranking so that the asymptotic estimator target matches MW exactly,
making the comparison clean. The genuinely uncertainty-aware Bayesian
advantages---lower credible bound ranking, posterior variance, Thompson
sampling, and risk-sensitive deprecation---have different asymptotic
targets and are deferred to future work as noted in
Section~\ref{sec:implications}.

\paragraph{Dual counts add information beyond the ratio alone.}
Using thresholds $\theta_H = 0.60$, $\theta_L = 0.40$, and $V_{\min}
= 10$ retrievals, the updating strategies classify on average 1.0--6.2
memories as low-value per seed. The score-proportional weighting
classifies 5.2 low-value memories vs.\ 2.1 for uniform weighting.
This difference reflects the weighting mechanism: sim-proportional
weights concentrate counts on high-similarity memories, causing
low-similarity memories to accumulate relatively more $\hitsminus$
weight and fall below $\theta_L$. The evidence-volume dimension $V_m$
correctly separates low-value from uncertain---memories with fewer than
10 retrievals are withheld from classification regardless of their
ratio. Whether low-value classification in this experiment reflects
genuine quality differences or retrieval confounding cannot be
determined from this design alone, as argued in
Section~\ref{sec:taxonomy}.

\subsection{Assumption (A3) Violation Studies}
\label{sec:a3violations}

Experiment~1 validates Memory Worth in the regime where all assumptions
hold by design. The three experiments below test three realistic ways
that assumption (A3) is violated, characterizing how and when the
estimator breaks, what the failure magnitude is, and how much partial
remedies recover. All three use $N=100$, $k=8$, $T=10{,}000$, 20 seeds.

\subsubsection{Experiment~2: Task-Difficulty Confound}

\begin{figure}[t]
\centering
\includegraphics[width=\columnwidth]{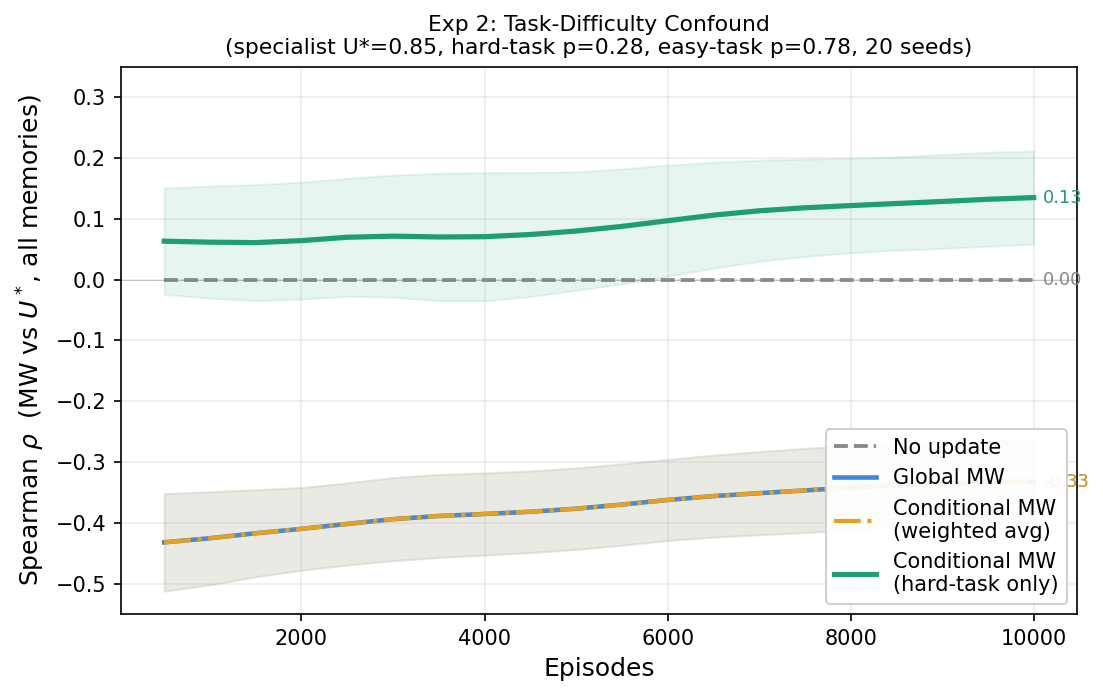}
\caption{Experiment~2: Task-difficulty confound (20 seeds, mean $\pm$ std).
Global MW (orange) remains negatively correlated with true utility
($\rho \approx -0.33$) because specialist memories ($\Ustar = 0.85$)
appear only on low-success hard tasks and are penalised by task difficulty.
A weighted-average conditional MW (yellow) fails equally: mixing easy- and
hard-task counts preserves the same base-rate confound. When MW is
conditioned on the hard-task population only and evaluated over memories
with hard-task evidence (green), the ranking signal becomes positive
($\rho \approx +0.14$), showing partial recovery once task-type
confounding is removed. The gap to the unconfounded baseline ($0.89$)
remains large, motivating within-stratum normalisation as future work.}
\label{fig:exp2}
\end{figure}

\paragraph{Setup.} The memory pool contains 70 generalist memories with
$\Ustar \sim \mathrm{Uniform}(0,1)$ and 30 specialist memories with
$\Ustar = 0.85$. Tasks are either easy (base success probability 0.78,
retrieval from generalists only) or hard (base success probability 0.28,
retrieval mixes generalists and specialists). Specialists therefore
appear exclusively on hard tasks.

\paragraph{Result.} Figure~\ref{fig:exp2} shows three curves. Global MW
(orange) remains negatively correlated with true utility
($\rho \approx -0.33$): specialist memories appear only on hard tasks
(success rate $0.28$) and are penalised by task difficulty rather than
low true utility, so they rank below generalists with much lower
$\Ustar$. A weighted-average conditional MW (yellow) fails equally at
$\rho \approx -0.33$: combining easy- and hard-task counts for every
memory leaves the hard-task base-rate confound mathematically intact. When MW
is conditioned on the hard-task population only---tracking
$\mathrm{MW}(m \mid \text{task} = \text{hard})$ and evaluating
only over memories with hard-task evidence---the ranking signal becomes
positive ($\rho \approx +0.14 \pm 0.07$), showing partial recovery
once task-type confounding is removed. The gap to the unconfounded
baseline of $0.89$ remains large because the hard-task base rate
($0.28$) still dilutes specialist scores relative to generalists that
appear on both task types.

\paragraph{Implication.} Removing the task-type confound by conditioning
on the hard-task population moves the ranking from $\rho \approx -0.33$
to $\rho \approx +0.14 \pm 0.07$, a shift of $\sim$0.47 points in the
right direction. Full recovery to the unconfounded baseline ($0.89$)
requires additionally controlling for the hard-task base rate within
each stratum---a within-stratum normalisation deferred to future work.
The practical message is that na\"ive global MW is not just noisy but
\emph{directionally wrong} under task-difficulty confounding; the correct
conditioning variable (task type) must be identified before MW estimates
can be used for governance decisions in this regime.

\subsubsection{Experiment~3: Retrieval Policy Feedback Loop}

\begin{figure}[t]
\centering
\includegraphics[width=\columnwidth]{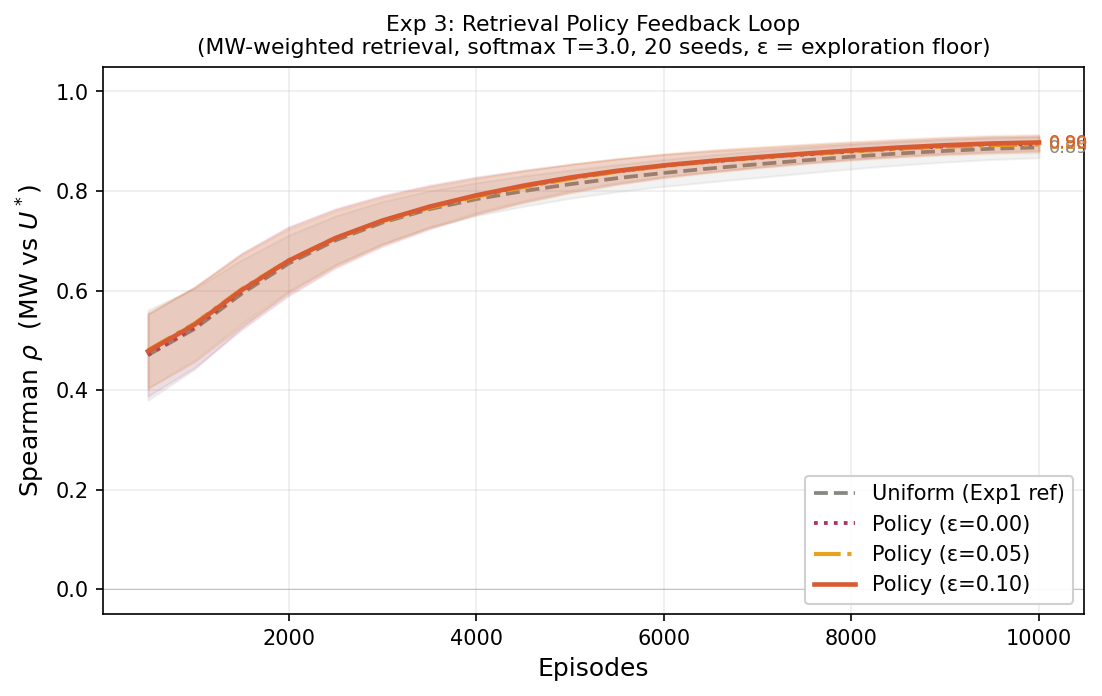}
\caption{Experiment~3: Retrieval policy feedback loop. MW-weighted
retrieval converges to $\rho \approx 0.896$--$0.899$ across all
exploration floors, matching or marginally exceeding the uniform reference
($\rho = 0.890$, 20 seeds, mean $\pm$ std). Even with $\varepsilon = 0$
(no forced exploration floor), the estimator does not degenerate. The
self-correcting feedback loop outweighs the rich-get-richer effect in
this regime.}
\label{fig:exp3}
\end{figure}

\paragraph{Setup.} Retrieval is replaced by a softmax policy biased by
current MW scores (temperature $\tau = 3.0$), mixed with a uniform
exploration floor $\varepsilon \in \{0.00, 0.05, 0.10\}$. This creates
the feedback loop that Theorem~\ref{thm:convergence} explicitly excludes.

\paragraph{Result.} Figure~\ref{fig:exp3} shows that all three policy
variants reach $\rho \approx 0.895$--$0.899$ by episode 10{,}000,
compared to $0.890$ for the uniform reference. Even without any
exploration floor ($\varepsilon = 0$), the estimator does not spiral
into degenerate concentration. The feedback loop is self-correcting:
high-MW memories retrieved more frequently drop in MW when they produce
failures, reducing their retrieval priority. This dampens rich-get-richer
dynamics and keeps the estimator calibrated.

\paragraph{Implication.} In the tested softmax-feedback regime---stationary
world, single temperature, uniform memory pool, 10{,}000 episodes---MW-based
retrieval did not collapse and remained self-correcting. This is a positive
result within that regime. Whether the same holds under non-stationary
utility, sparse specialist memories, adversarial skew, or harder task
distributions remains untested. The theoretical exploration condition (A2)
is sufficient for the convergence guarantee but may not be necessary in
practice; characterising the exact conditions for robustness is future work.

\subsubsection{Experiment~4: Co-Retrieval Confound}

\begin{figure}[t]
\centering
\includegraphics[width=\columnwidth]{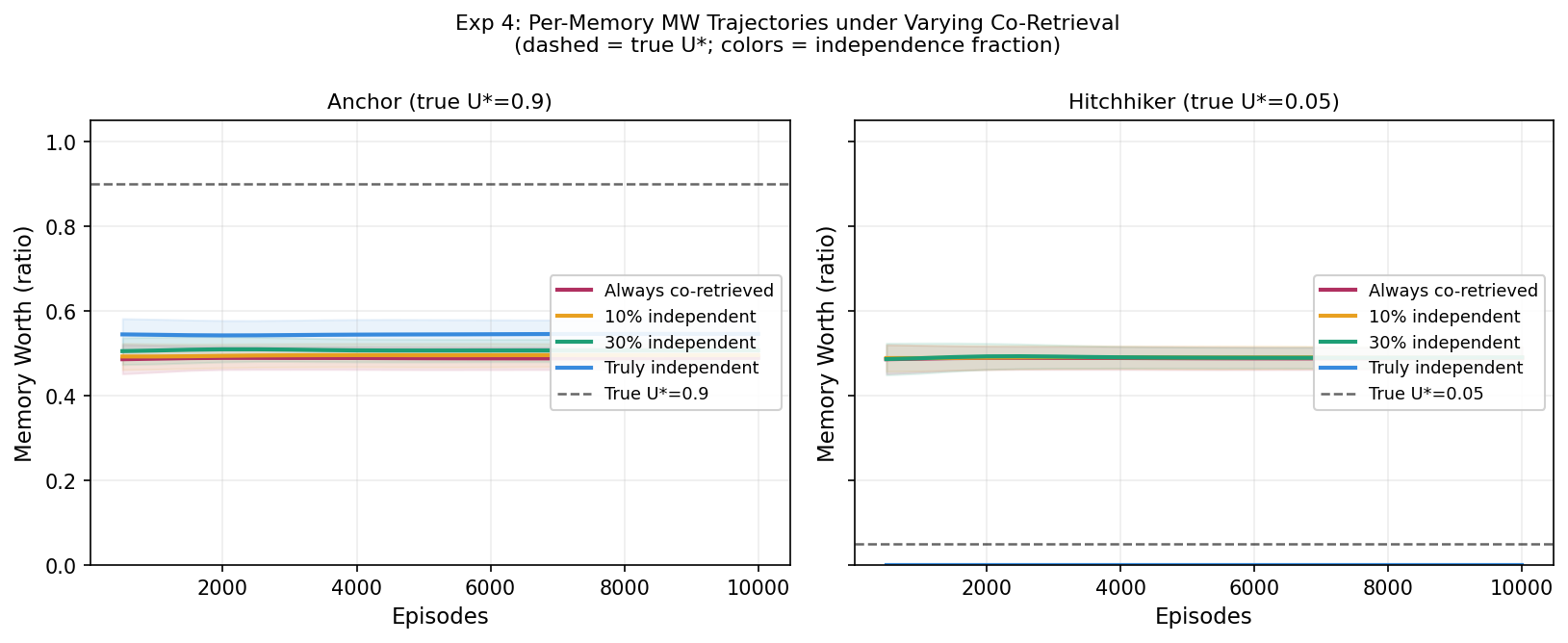}
\caption{Experiment~4: Per-memory MW trajectories for anchor
($\Ustar = 0.90$) and hitchhiker ($\Ustar = 0.05$) under varying
independence fractions. Dashed lines mark true $\Ustar$. With 0\%
independent retrievals, both memories converge to identical MW~$\approx
0.49$, indistinguishable despite a 17$\times$ difference in true
utility. Separation requires at least 30\% independent episodes, and
full convergence to true utility only with 100\% independence.}
\label{fig:exp4}
\end{figure}

\paragraph{Setup.} One ``anchor'' memory ($\Ustar = 0.90$) and one
``hitchhiker'' ($\Ustar = 0.05$) are always retrieved together except
in a controlled fraction of episodes (the \emph{independence fraction})
where only the anchor is retrieved. The hitchhiker contributes nothing
causally but accumulates positive counts whenever co-retrieved with the
anchor during successful episodes.

\paragraph{Result.} Figure~\ref{fig:exp4} shows the per-memory MW
trajectories. With 0\% independent retrievals, both memories converge
to identical $\mathrm{MW} \approx 0.49$---MW cannot distinguish them
at all. With 10\% independence, the separation is negligible ($0.496$
vs $0.490$). Only at 30\% does meaningful divergence begin, and with 100\% independence the anchor reaches MW~$= 0.546$ and the
hitchhiker MW~$= 0.000$---correctly ordered, though the anchor has not
yet converged to its true $\Ustar = 0.90$ within 10{,}000 episodes.
Full convergence to $\Ustar$ requires more episodes under any weighting.

\paragraph{Implication.} Co-retrieval confounding is the most severe
failure mode studied here. In this controlled setting---one utility gap,
one episode budget, uniform weighting---meaningful separation emerged only
once $\approx$30\% of episodes broke the habitual co-retrieval pair.
The 30\% figure is specific to this simulator and should not be read as
a universal threshold; the required diversity fraction will depend on the
utility gap between anchor and hitchhiker, episode count, and the
retrieval topology. The broader design implication holds regardless:
retrieval diversity---ensuring memories are occasionally retrieved without
their habitual co-retrievals---is a necessary condition for MW to
distinguish confounded from genuinely high-quality memories.

\subsection{Experiment~5: Text-Based Retrieval Agent}
\label{sec:exp5}

The four preceding experiments use abstract utility numbers and synthetic
retrieval. Experiment~5 replaces these with real text memories and neural embedding
retrieval (\texttt{all-MiniLM-L6-v2}~\citep{reimers2019sbert}, cosine
similarity), with a keyword-match outcome function requiring no LLM API.
The goal is to test whether the MW signal survives contact with real text
and modern semantic retrieval, using 3{,}000 episodes across 20 seeds.

\paragraph{Setup.}
The memory store contains 20 text sentences covering geography, Python
programming, and general science. Four memories are designated by type:
\emph{stale} (pre-1993 Czechoslovakia facts, correct in Phase~1 and wrong
after the phase shift), \emph{specialist} (Python list reversal, useful only
for Python tasks), \emph{hitchhiker} (general Python list methods,
semantically near the specialist but unhelpful for reversal tasks), and
\emph{control} (recursion, consistently useful across tasks).

The agent runs 3{,}000 episodes. At each episode a task is sampled from a
phase-dependent distribution: Phase~1 (episodes 1--100) draws 60\% geography
questions for which the stale memory is correct; Phase~2 (episodes 101--3{,}000)
shifts to 30\% dissolution-framing geography (stale wrong), 35\% Python, and
35\% general. Retrieval uses \texttt{all-MiniLM-L6-v2} cosine similarity
blended with current MW scores (score $= 0.6 \times \text{emb\_sim} + 0.4
\times \mathrm{MW}$). Outcomes are binary keyword-match: a success requires the agent's
retrieved memory set to contain task-relevant keywords, which is not
trivially satisfied---queries are paraphrased relative to memory
text, so matches depend on retrieval quality rather than surface
overlap. MW is updated with uniform credit $w = 1/k$.
All 20 seeds run locally.

\begin{figure}[t]
\centering
\includegraphics[width=\columnwidth]{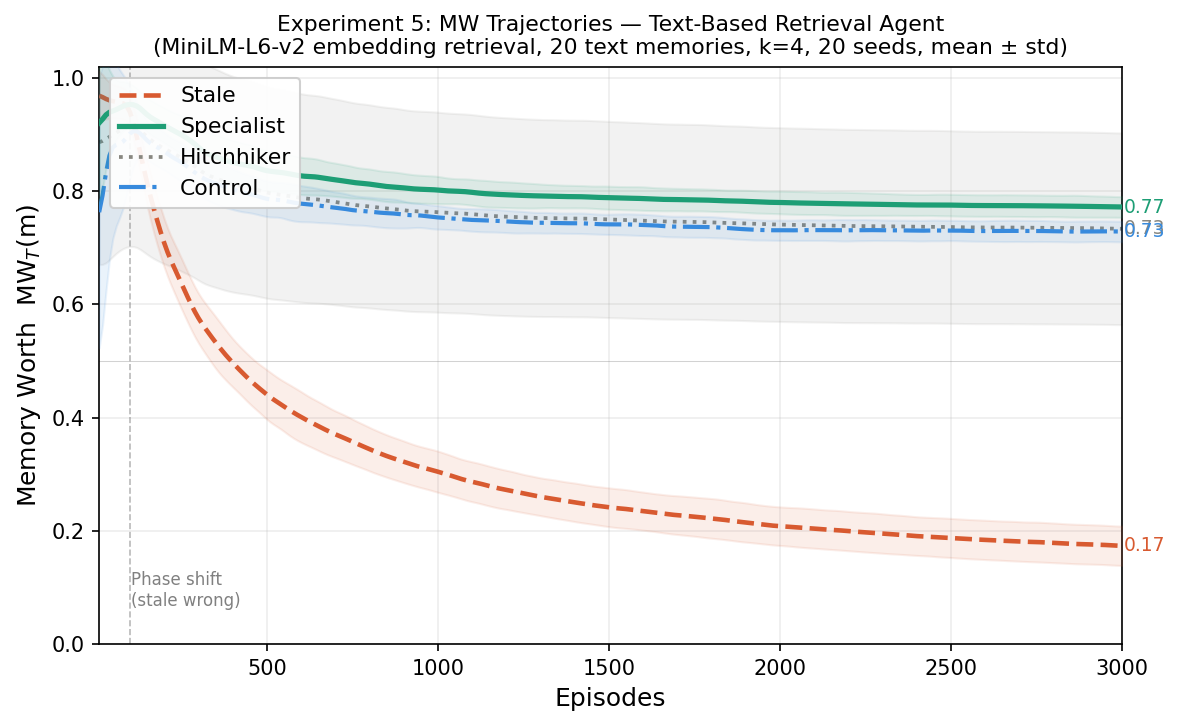}
\caption{Experiment~5: MW trajectories in a text-based retrieval agent
(\texttt{all-MiniLM-L6-v2} embedding retrieval, 3{,}000 episodes, 20 seeds,
mean $\pm$ std). The stale memory (dashed orange) peaks at $\approx 0.97$
in Phase~1, drops sharply after episode~100, crosses $\theta_L = 0.40$
near episode~300, and ends at $0.17$---well into the low-value category,
with no sign of convergence, warranting deprecation. The specialist (solid
green) stabilises at $0.77$. The hitchhiker (dotted grey, $0.77$) and
control (dash-dot blue, $0.73$) are both elevated and nearly
indistinguishable from the specialist---consistent with the co-retrieval
confound of Experiment~4: MiniLM routes Python queries to both specialist
and hitchhiker due to their semantic similarity.}
\label{fig:exp5}
\end{figure}

\begin{table}[t]
\caption{Experiment~5 MW summary (values from figure, 20 seeds). \texttt{all-MiniLM-L6-v2} retrieval, 3{,}000 episodes; stale crosses
$\theta_L=0.40$ near episode~300.}
\label{tab:exp5}
\vskip 0.1in
\begin{center}
\begin{small}
\begin{tabular}{lcccc}
\toprule
Memory type & MW @ ep100 & MW @ ep500 & MW @ ep3000 \\
\midrule
Stale       & ${\approx}0.97$ & ${\approx}0.40$ & $0.17$ \\
Specialist  & ${\approx}0.93$ & ${\approx}0.81$ & $0.77$ \\
Hitchhiker  & ${\approx}0.91$ & ${\approx}0.79$ & $0.77$ \\
Control     & ${\approx}0.78$ & ${\approx}0.77$ & $0.73$ \\
\bottomrule
\end{tabular}
\end{small}
\end{center}
\vskip -0.1in
\end{table}

\paragraph{Results.}
Figure~\ref{fig:exp5} and Table~\ref{tab:exp5} show the MW trajectories
across 3{,}000 episodes with \texttt{all-MiniLM-L6-v2} embedding retrieval.
The stale memory peaks at $\approx 0.97$ in Phase~1, drops sharply from
episode~100, crosses $\theta_L = 0.40$ near episode~300, and ends at
$0.17$ by episode~3{,}000---well into the low-value category and still
declining, making deprecation the warranted response. The specialist
stabilises at $0.77$. The hitchhiker ($0.77$) and control ($0.73$) are
both elevated and nearly indistinguishable from the specialist.
Experiment~5 reproduces the hitchhiker pathology from Experiment~4
under modern embedding retrieval, demonstrating that semantic retrieval
systems naturally induce the same co-retrieval confound predicted by the
synthetic analysis: MiniLM routes Python queries to both specialist (list
reversal) and hitchhiker (general list methods) due to their embedding
proximity, causing both to accumulate positive counts regardless of
causal contribution. The separation between stale ($0.17$) and all
other memory types ($0.73$--$0.77$) is at least 0.56 points and grows
monotonically after the phase shift, confirming that MW identifies the
genuinely degraded memory under modern semantic retrieval.

\paragraph{Limitations of this experiment.}
The early variance on specialist and hitchhiker is wide ($\pm 0.206$--$0.259$
at episode~100) because 20 text memories with $k=4$ retrieval gives each
memory substantial random-retrieval exposure in Phase~1. The task environment
is still scripted---outcomes come from keyword matching, not from a live
agent---so this is best characterised as a \emph{retrieval-realistic}
micro-experiment rather than a full deployment study. The simulation-only
limitation in Section~\ref{sec:limitations} applies.

\section{Toward Memory Governance Systems}
\label{sec:implications}

Memory Worth is designed as a foundation layer, not a complete system.
Adding two counters per memory unit to architectures that already log
retrievals and episode outcomes gives those systems a convergent
signal about which stored memories tend to co-occur with success.

\paragraph{Staleness detection.}
A memory whose $\mathrm{MW}_T(m)$ was high but has been declining
steadily is a candidate for re-verification. A low and stable
$\mathrm{MW}_T(m)$ with sufficient evidence $V_m \geq V_{\min}$ is a
reasonable trigger for retrieval suppression or human review, with the
caveat that the signal reflects association, not causation.

\paragraph{Retrieval prioritization.}
$\mathrm{MW}_T(m)$ can be added as a term in existing retrieval scoring
functions (e.g., alongside recency and embedding similarity) with no
other changes. This requires that the system logs which memories were
retrieved and what episode-level outcome followed---instrumentation that
many agent frameworks already support.

\paragraph{Open directions.}
Four extensions are motivated directly by the experiments.

First, \emph{contextual Memory Worth} $\mathrm{MW}(m \mid c)$
conditioned on a context variable $c$, motivated by Experiment~2:
global MW misevaluates specialist memories by $\sim$0.33$\rho$ under
task-difficulty confounding, and task-conditioned MW is the natural fix.
Experiment~2 treats task type as a known oracle label, but in deployment
the conditioning variable must be discovered or learned. Practical
candidates include: learned query clusters from embedding similarity
(unsupervised, no labels required); latent task embeddings derived from
the agent's tool-call sequences or reasoning traces; tool-path signatures
that identify structurally similar episodes; or user intent clusters from
session history. Each of these can be computed without ground-truth task
labels and provides a principled partition under which per-cluster MW
estimates are more informative than the global ratio.

Second, a \emph{Bayesian Beta-Bernoulli} formulation with
uncertainty-aware ranking (e.g.\ lower credible bound rather than
posterior mean) would give principled uncertainty quantification
beyond the point estimate, avoiding the need for a hard evidence
threshold $V_{\min}$. Experiment~1 shows the posterior mean
converges to the same ranking as MW at $K_T \gg \alpha+\beta$;
the remaining gain from the Bayesian formulation is in the low-count
regime and in credible-interval-based governance decisions. Third, a
\emph{non-stationary} variant using exponential moving averages would
extend the estimator to settings where $p^+(m)$ changes over time, at
the cost of the a.s.\ convergence guarantee. Fourth,
\emph{retrieval diversity} as a first-class system property: Experiment~4
shows that co-retrieval confounding persisted until $\approx$30\% of
episodes broke the habitual co-retrieval pair in that specific setting. Enforcing a retrieval diversity floor---analogous to the exploration floor in Experiment~3---should be treated as a governance invariant rather than an afterthought.

\section{Limitations}
\label{sec:limitations}

These limitations are not merely weaknesses of MW; they are design
constraints that any practical memory governance system built on MW
must explicitly address.

\paragraph{Association, not causation.}
$p^+(m)$ measures retrieval-outcome co-occurrence. A memory that rides
along with genuinely useful neighbors in the retrieval set will
accumulate positive counts even if it contributes nothing causally.
Conversely, a memory consistently retrieved alongside difficult tasks
will be penalized even if it is genuinely informative. Causal
identification of per-memory utility is an open problem that Memory
Worth does not resolve.

\paragraph{Assumption (A3) in realistic agents.}
The convergence theorem requires that retrieval and outcome are
conditionally independent given history. This holds in Experiment~1
by design (uniform random retrieval) but is violated in agents that
retrieve harder memories for harder tasks. Experiment~2 quantifies this
effect: global MW is negatively correlated with true utility
($\rho \approx -0.33$) under task-difficulty confounding. Conditioning
on the hard-task population moves the signal to $\rho \approx +0.14$,
showing partial recovery once task-type confounding is removed, but a
large residual gap to the unconfounded baseline ($0.89$) remains. Experiment~4 shows that co-retrieval
confounding is equally severe: memories always retrieved together are
indistinguishable even after 10{,}000 episodes; in that setting,
meaningful separation emerged only once $\approx$30\% of episodes broke
the co-retrieval pair, with the exact threshold depending on utility gap,
episode count, and retrieval topology. In practice, this means that
context identification and confound control are not optional refinements;
they are governance requirements for any deployment that uses MW to act
on memory quality.

\paragraph{Stationarity.}
The theorem assumes a stationary task distribution. Under distribution
shift, accumulated counts reflect a mixture of old and new utilities.
An exponential moving average can discount old observations but
loses the convergence guarantee.

\paragraph{Minimum evidence threshold.}
Estimates are unreliable for rarely retrieved memories. The
evidence-volume dimension $V_m$ in the taxonomy handles this in
principle (withhold classification until $V_m \geq V_{\min}$), but the
threshold is a hyperparameter. A Beta-Bernoulli formulation would give
automatic uncertainty quantification without a manual threshold.

\paragraph{Outcome signal quality.}
When episode outcomes are highly stochastic relative to the
contribution of any individual memory, the signal is diluted and
convergence is slow. Filtering high-variance episodes ($\epsilon$
signal) can mitigate this at the cost of reduced data efficiency.

\paragraph{Retrieval-realistic but not deployment-complete.}
Experiment~5 confirms both the direction and the magnitude of the MW signal
under neural embedding retrieval (\texttt{all-MiniLM-L6-v2}): the stale
memory crosses $\theta_L = 0.40$ by episode~300 and ends at $0.17$.
The task environment remains scripted---outcomes come from keyword matching
rather than a live agent---so this is best characterised as a
retrieval-realistic micro-study. Different embedding models may alter
the degree of hitchhiker confounding by changing neighbourhood topology,
but the mechanism itself is retrieval-model agnostic: any dense retrieval
system that routes semantically similar queries to multiple memories will
induce the same co-retrieval coupling. A live retrieval-augmented agent with natural outcome signals remains the highest-value remaining empirical validation step. More importantly, it would test whether the governance variables required by MW in practice
---context partitions, retrieval diversity, and uncertainty-aware action
thresholds---are recoverable from real agent logs.

\section{Conclusion}
\label{sec:conclusion}

This paper introduced Memory Worth, a per-memory online estimator of the
conditional success probability $p^+(m) = \Pr[y_t = +1 \mid m \in
\mathcal{M}_t]$, maintained as a ratio of two weighted retrieval counts.
We establish almost-sure convergence to $p^+(m)$ under a stationary
retrieval regime satisfying a minimum exploration condition, a conditional independence
assumption (A3), and an outcome stationarity condition (A6). The martingale argument
handles the dependence that arises when retrieval weights depend on past
history, but it does not handle the case where the retrieval decision
itself is coupled to the outcome---which is precisely when (A3) is
violated. This scope limitation is explicit throughout, and the three A3 violation experiments characterise the failure modes in concrete, quantified terms.

Empirically, Memory Worth reaches Spearman $\rho = 0.89 \pm 0.02$
in the controlled setting where all assumptions hold (Experiment~1).
Three A3 violation experiments reveal the failure modes quantitatively
(Experiments~2--4). A fifth retrieval-realistic micro-experiment
(\texttt{all-MiniLM-L6-v2} embedding retrieval, 3{,}000 episodes)
  shows the stale memory crossing $\theta_L = 0.40$ by episode~300
  and ending at $0.17$, while specialist MW stabilises at $0.77$---
  confirming the signal under modern semantic retrieval (Experiment~5).

The core observation is that every agent already runs an implicit
experiment: memories are retrieved, actions are taken, outcomes are
observed. Memory Worth formalizes what it means to actually read the
results of that experiment---and to act on them. The failure modes characterised here define the systems requirements for the next generation of memory governance architectures: task-conditioned estimation, retrieval diversity as a design constraint,
and uncertainty-aware ranking at low evidence counts. Memory Worth is not the full memory governance system, but it is
the minimal operational primitive such systems require.


\end{document}